\documentclass[letterpaper, 10 pt, conference]{ieeeconf}
\IEEEoverridecommandlockouts

\usepackage{cite}
\usepackage{amsmath,amssymb,amsfonts}
\usepackage[ruled,vlined,linesnumbered,noend]{algorithm2e}
\usepackage{graphicx}
\usepackage{textcomp}
\usepackage{xcolor}
\usepackage{ragged2e}
\usepackage{optidef}
\usepackage{subcaption}      % use this OR subfig, never both -> keeping subcaption
\usepackage{cleveref}
\usepackage{multirow}
\usepackage{url}
\usepackage{diagbox}
\usepackage[font=footnotesize]{caption}
\usepackage{soul}
\usepackage{float}
\usepackage{resizegather}
\usepackage{makecell}
\usepackage{booktabs}
\usepackage{tabularx}
\let\labelindent\relax
\usepackage{enumitem}
\usepackage{import}
\newtheorem{theorem}{Theorem}

\newtheorem{assumption}{\bf{Assumption}}

\usepackage{tikz}
\usepackage{tkz-euclide}
\usetikzlibrary{shapes.geometric}

\def\BibTeX{{\rm B\kern-.05em{\sc i\kern-.025em b}\kern-.08em
    T\kern-.1667em\lower.7ex\hbox{E}\kern-.125emX}}

\begin{document}

\title{\textbf{Robust and Scalable UAV-UGV Planning for Aerial Monitoring under Energy and Precedence Constraints}}

\author{Cahit Ikbal Er and Yasin Yaz{\i}c{\i}o\u{g}lu%
\thanks{C. I. Er is with the Department of Mechanical and Industrial Engineering,
Northeastern University, Boston, MA, USA (email: er.c@northeastern.edu).}
\thanks{Y. Yaz{\i}c{\i}o\u{g}lu is with the Departments of Mechanical and Industrial
Engineering and Electrical and Computer Engineering, Northeastern University,
Boston, MA, USA (email: y.yazicioglu@northeastern.edu).}
}

\maketitle
\pagestyle{empty}

%%%%%%%%%%%%%%%%%%%%%%%%%%%%%%%%%%%%%%%%%%%%%%%%%%%%%%%%%%%%%%%%%%%%%%%%%%%%%%

\begin{abstract}
We consider planning for unmanned aerial vehicle (UAV) and unmanned ground vehicle (UGV) teams in aerial monitoring missions with ordering requirements on the monitoring points to be visited by the UAV.  The system consists of multiple UAV-UGV teams, each comprising one UAV and one UGV. UAV energy limitations are modeled through endurance, defined as the maximum flight time on a full battery. The UGVs serve as mobile charging stations and transport the UAVs between consecutive aerial tours. Given a set of monitoring points subject to ordering constraints and the teams’ initial and terminal positions, the objective is to minimize mission completion time while satisfying UAV energy limits and the required visit order under bounded travel-time disturbances that may arise during execution. Unlike prior work, the plan jointly determines vehicle routes and their timing, allowing teams to slow down as needed to satisfy the ordering constraints. We develop a polynomial-time algorithm that generates efficient solutions to the underlying NP-hard planning problem. We establish feasibility guarantees, characterize the algorithm’s computational complexity, and evaluate its performance via numerical comparisons.
\end{abstract}

\section{Introduction}
Unmanned aerial vehicles (UAVs) are used in various domains such as agriculture and environmental monitoring (e.g., \cite{tokekar2016sensor, ren2019uavmonitoring}). However, their limited energy restricts missions defined over large areas or long time-horizons. Using unmanned ground vehicles (UGVs) as mobile charging stations addresses this limitation by providing a flexible alternative to fixed recharging infrastructure.
Recently, energy-constrained planning of UAV-UGV teams has gained significant attention (e.g., \cite{yu2018algorithms,ropero2019terra,ramasamy2021cooperative,chen2021delivery, er2026energy, er2026rspect, fowler2026prospect}).  However, these methods generally assume that points of interest may be visited in any order. 

In practice, aerial monitoring missions may require some locations to be visited before others. In multi-robot task allocation, such ordering requirements are formalized as cross-schedule dependencies \cite{korsah2013comprehensive}. Exact methods have addressed vehicle-routing problems with temporal synchronization \cite{bredstrom2007branch}, while polynomial-time sequencers handle task allocation under temporal and spatial constraints \cite{gombolay2018fast}; however, these approaches generally do not model vehicle energy. Similarly, \cite{bischoff2020multi} schedules cooperative tasks using greedy construction and local search, but focuses on task scheduling rather than energy-constrained spatial planning. More recently, \cite{sojo2024optimal} computes optimal precedence-constrained plans for heterogeneous UAV teams under energy limits, though its computational complexity grows significantly with team size and recharging relies on static infrastructure.

In this paper, we propose a heuristic algorithm for robust UAV-UGV planning in aerial monitoring missions with precedence constraints. The goal is to minimize the overall mission completion time while respecting the energy constraints and the required visit order under bounded travel-time disturbances. We formulate this planning problem as a mixed-integer program, which is NP-hard. Our main contribution is an efficient heuristic (Alg. \ref{alg:precept}) that jointly determines vehicle routes and their timing, with formal guarantees on feasibility and worst-case time complexity (Theorem \ref{thm:main}). We validate our approach via numerical comparisons against branch-and-cut, simulated annealing, and a heuristic baseline that assigns all locations that are weakly connected by precedence constraints to the same team.

\color{black}

\section{Problem Formulation}
\label{sec:problem_formulation}
We consider an aerial monitoring mission executed by multiple UAV-UGV teams. The energy-constrained UAVs must visit a set of monitoring points, supported by the UGVs which serve as mobile charging stations and transport the UAVs between take-off and landing locations. Two features distinguish this formulation from prior work (e.g., \cite{er2026rspect, fowler2026prospect}). First, precedence requirements constrain the order in which the monitoring points may be visited. Second, the plan selects not only where the vehicles go but how long each leg takes, allowing a team to slow down if needed rather than reroute to satisfy these ordering constraints.

Uncertainty is treated in the deterministic, worst-case sense of robust optimization under interval uncertainty \cite{ben2009robust}. Following \cite{er2026rspect}, the plan incorporates margins $\delta_a$ and $\delta_g$ to ensure that unexpected travel delays do not deplete a UAV's energy. However, these margins do not preserve the mission's visit order: a delay at one point could cause it to be visited after a point it must precede. To prevent this, we introduce an ordering margin, $\delta_o$. This enforces a minimum temporal separation between precedence-constrained points, ensuring bounded delays do not violate the required visit order.

In the remainder, $\mathbb{R}$ and $\mathbb{R}_{\geq 0}$ denote the sets of real and non-negative real numbers. Let the environment be
\begin{equation}
\label{eq:env}
    \mathcal{Q} \coloneqq \{(x,y,z) \in \mathbb{R}_{\geq 0}^{3} \mid x \leq \bar{x},\, y \leq \bar{y},\, z \leq \bar{z}\},
\end{equation}
containing a feasible space $\mathcal{Q}_f \subseteq \mathcal{Q}$ determined by the obstacles known a priori, with ground and air subsets $\mathcal{Q}^g_f {=} \{(x,y,z) \in \mathcal{Q}_f \mid z {=} 0\}$ and $\mathcal{Q}^a_f {=} \mathcal{Q}_f \setminus \mathcal{Q}^g_f$. Both $\mathcal{Q}_f$ and $\mathcal{Q}^g_f$ are assumed connected, so that a feasible path exists between any pair of points within the respective sets, and $\ell : \mathcal{Q}_f \times \mathcal{Q}_f \rightarrow \mathbb{R}_{\geq 0}$ returns the length of the shortest path between two points of $\mathcal{Q}_f$. Additionally, let $\pi: \mathcal{Q}_f \to \mathcal{Q}_f^{g}$ be the feasible ground projection function, defined as $\pi(p) = \arg\min_{q \in Q_f^g} \ell(p,q)$.

The mission is specified by a set of $n$ aerial monitoring points $\mathcal{P}_{\text{UAV}} = \{p_1,\dots,p_n\} \subset \mathcal{Q}^a_f$. There are $m$ teams (UAV-UGV pairs) indexed by $\mu \in \{1,\dots,m\}$, each starting from a start position $p^\mu_{\text{s}} \in \mathcal{Q}^g_f$ and required to reach a specified final position $p^\mu_{\text{f}} \in \mathcal{Q}^g_f$. The teams must collectively visit every point of $\mathcal{P}_{\text{UAV}}$\footnotemark[1] in an order consistent with a given precedence relation, specified as a directed acyclic graph (DAG) $G_\prec {=} (\mathcal{P}_{\text{UAV}}, E_\prec)$ whose edges $(p,q) \in E_\prec$ require $p$ to be visited before $q$.\footnote{No assumption is made on how $\mathcal{P}_{\text{UAV}}$ and $E_\prec$ arise: the points may encode imaging viewpoints at a desired resolution, and the ordering a requirement such as surveying the perimeter of a site before entering it.} Acyclicity ensures that a valid visit order always exists, as any topological sort of $G_\prec$ provides one \cite{cormen2009algorithms}. The graph $G_\prec$ can encode any partial order and need not be connected; isolated vertices carry no ordering requirements.

A UAV cannot fly continuously longer than its maximum flight time $\overline{\tau}_a$. Because power consumption is approximately constant for UAVs flown at constant average speed (e.g., \cite{abdilla2015power}), flight time acts as a direct proxy for energy consumption, making $\overline{\tau}_a$ the energy budget, as is standard in the literature (e.g., \cite{yu2018algorithms, ropero2019terra, chen2021delivery, ramasamy2021cooperative, er2026rspect,  er2026energy, fowler2026prospect}). UGVs carry larger batteries and are assumed to have sufficient energy for the mission.

To respect $\overline{\tau}_a$, the UGVs transport the UAVs through a sequence of \emph{tours}. In each tour, a UAV takes off from a \emph{release} point on the ground, visits a subset of $\mathcal{P}_{\text{UAV}}$, and lands on the UGV at the corresponding \emph{collect} point to recharge. The UGV then carries it to the next release point, or to $p^\mu_{\text{f}}$ if the mission is complete. Because a single tour can visit at most $n$ monitoring points, the plan of team $\mu$ is encoded as the matrix
\begin{equation*}
    X^{\mu} = \begin{bmatrix}
        X^{\mu}_{1,1} & \cdots & X^{\mu}_{1,n+2} \\
        \vdots & \ddots & \vdots \\
        X^{\mu}_{n,1} & \cdots & X^{\mu}_{n,n+2}
    \end{bmatrix},
\end{equation*}
where $X^\mu_{i,j} \in \mathcal{Q}_f$ is a release point if $j{=}1$, a collect point if $j{=}n{+}2$, and a monitoring point to be visited by the UAV if $j \in \{2,\dots,n{+}1\}$. Row $i$, denoted $X^\mu_i$, represents one such tour. Setting $X^\mu_{i,1} {=} \cdots {=} X^\mu_{i,n+2}$ in a row yields a trivial tour of zero length, ensuring plans with fewer than $n$ tours are representable. The overall mission plan is $X {=} \{X^1,\dots,X^m\}$.

Let $\underline{\theta}_a : \mathcal{Q}_f \times \mathcal{Q}_f \rightarrow \mathbb{R}_{\geq 0}$ and $\underline{\theta}_g : \mathcal{Q}_f^g \times \mathcal{Q}_f^g \rightarrow \mathbb{R}_{\geq 0}$ be the fastest travel times in which the UAVs and the UGVs can traverse a shortest path between respective points,\footnote{{We do not commit to specific forms for $\underline{\theta}_a$, $\underline{\theta}_g$ and $\tau_c$. The simplest instance takes shortest paths and divides them with fixed speeds, with $\tau_c$ modeling either proportional recharging or a constant battery swap.}} with $\underline{\theta}_a(p,p)=\underline{\theta}_g(p,p)=0$. The duration of each leg is assigned independently and lower-bounded by these fastest travel times. These durations form the \emph{timing} of the mission. Let the timing for team $\mu$ be defined by the tuple $\Theta^\mu = (\Theta_a^\mu, \Theta_g^\mu, \Theta_{ga}^\mu, \theta_{\text{s}}^\mu, \theta_{\text{f}}^\mu)$. Here, $\Theta_a^\mu \in \mathbb{R}_{\geq 0}^{n \times (n+1)}$ is the flight timing matrix, where entry $\theta_a^{\mu, i, j}$ is the UAV's flight time from $X_{i,j}^\mu$ to $X_{i, j+1}^\mu$. The vector $\Theta_g^\mu \in \mathbb{R}_{\geq 0}^n$ defines UGV relocation times, with entry $\theta_g^{\mu, i}$ denoting the time from the release $X_{i,1}^\mu$ to the collect $X_{i, n+2}^\mu$ within tour $i$. Between consecutive tours, the gap timing vector $\Theta_{ga}^\mu \in \mathbb{R}_{\geq 0}^{n-1}$ provides the transport time $\theta_{ga}^{\mu, i}$ from $X_{i, n+2}^\mu$ to $X_{i+1, 1}^\mu$. Finally, the scalars $\theta_{\text{s}}^\mu$ and $\theta_{\text{f}}^\mu$ denote the initial and final transport times from $p_{\text{s}}^\mu$ to $X_{1,1}^\mu$ and from $X_{n, n+2}^\mu$ to $p_{\text{f}}^\mu$, respectively. The overall mission timing is $\Theta = \{\Theta^\mu\}_{\mu=1}^m$.
% \emph{pacing}
The times to complete the UAV and UGV portions of tour $i$ are $\tau_a(X^\mu_i, \Theta^{\mu}) = \sum_{j=1}^{n+1} \theta_a^{\mu,i,j}$ and $\tau_g(X^\mu_i, \Theta^{\mu}) = \theta_g^{\mu,i}$, while the subsequent UGV transport time is $\tau_{ga}(X^\mu_i, \Theta^{\mu}) = \theta_{ga}^{\mu,i}$. Concurrently, the UAV requires a recharge time defined by $\tau_c\big(\max(\tau_a(X^\mu_i, \Theta^{\mu}), \tau_g(X^\mu_i, \Theta^{\mu}))\big) \geq 0$, where $\tau_c : \mathbb{R}_{\geq 0} \to \mathbb{R}_{\geq 0}$ with $\tau_c(0)=0$ maps the airborne time on a tour to the time needed to recharge the UAV.

The total execution time for team $\mu$, denoted $\tau(X^{\mu}, \Theta^{\mu})$, is the sum of its initial and final transport times, the time spent executing the tours, and the time spent between tours while the UGV carries and recharges the UAV:
\begin{equation}
\begin{aligned}
\tau(X^{\mu}, \Theta^{\mu}) {=}\ & \theta_{\text{s}}^{\mu} + \theta_{\text{f}}^{\mu} + \sum_{i=1}^{n} \max(\tau_a(X^{\mu}_{i}, \Theta^{\mu}),\, \tau_g(X^{\mu}_{i}, \Theta^{\mu})) \\[-2pt]
& + \sum_{i=1}^{n-1} \max\Big(\tau_{ga}(X^{\mu}_{i}, \Theta^{\mu}),\,
    \tau_c\big(\max(\tau_a(X^{\mu}_{i}, \Theta^{\mu}),\,
    \tau_g(X^{\mu}_{i}, \Theta^{\mu}))\big)\Big).
\end{aligned}
\label{eq:objective_function}
\end{equation}

A \textit{plan} $X$ and \textit{timing} $\Theta$ determine the time each monitoring point is reached. Let $t_p(X, \Theta)$ denote the \emph{visit time} of $p$, defined as the earliest time any UAV reaches $p$ relative to the mission start. Since $t_p(X, \Theta)$ accumulates preceding leg durations, a delay in any leg propagates to all subsequent visits dependent on it.

We define the planning problem as minimizing the makespan over the $m$ teams as:
{\small
\begin{mini!}|s|[2]
    {X,\, \Theta}
    {\displaystyle \max_{\mu \in \{1,\dots,m\}} \; \tau(X^{\mu}, \Theta^{\mu}) \label{eq:cost}}
    {\label{opt_problem}}
    {}
    \addConstraint{\forall p_k \in \mathcal{P}_{\text{UAV}},\; \exists\, X^{\mu}_{i,j} = p_k \label{eq:cov}}
    \addConstraint{\tau_a(X^{\mu}_{i}, \Theta^{\mu}) + \delta_a \leq \overline{\tau}_{a},}{\, \scriptstyle \forall i,\forall\mu \label{eq:air}}
    \addConstraint{\tau_g(X^{\mu}_{i}, \Theta^{\mu}) + \delta_g \leq \overline{\tau}_{a},}{\, \scriptstyle \forall i,\forall\mu \label{eq:ground}}
    \addConstraint{X^{\mu}_{i,1},\, X^{\mu}_{i,n+2} \in \mathcal{Q}_f^g,}{\, \scriptstyle \forall i,\forall\mu \label{eq:ground_pts}}
    \addConstraint{X^{\mu}_{i,j} \in \mathcal{P}_{\text{UAV}} \cup \{X^{\mu}_{i,j{-}1}\},}{\, \scriptstyle \forall j{\in}\{2,\dots,n{+}1\},\forall i,\forall\mu \label{eq:struct}}
    \addConstraint{\theta_a^{\mu,i,j} \geq \underline{\theta}_a(X^{\mu}_{i,j}, X^{\mu}_{i,j+1}),}{\, \scriptstyle \forall j{\in}\{1,\dots,n{+}1\},\forall i,\forall\mu \label{eq:air_floor}}
    \addConstraint{\theta_g^{\mu,i} \geq \underline{\theta}_g(X^{\mu}_{i,1}, X^{\mu}_{i,n+2}),}{\, \scriptstyle \forall i,\forall\mu \label{eq:ground_floor}}
    \addConstraint{\theta_{ga}^{\mu,i} \geq \underline{\theta}_g(X^{\mu}_{i,n+2}, X^{\mu}_{i+1,1}),}{\, \scriptstyle \forall i{\in}\{1,\dots,n{-}1\},\forall\mu \label{eq:transport_floor}}
    \addConstraint{\theta_{\text{s}}^{\mu} \geq \underline{\theta}_g(p^\mu_{\text{s}}, X^{\mu}_{1,1}),\ \ \theta_{\text{f}}^{\mu} \geq \underline{\theta}_g(X^{\mu}_{n,n+2}, p^\mu_{\text{f}}),}{\, \scriptstyle \forall\mu \label{eq:boundary_floor}}
    \addConstraint{t_p(X, \Theta) + \delta_o \leq t_q(X, \Theta),}{\, \scriptstyle \forall(p,q){\in}E_\prec \label{eq:prec}.}
\end{mini!}}
\color{black}
Constraint \eqref{eq:cov} ensures that every monitoring point is visited by at least one team. Constraints \eqref{eq:air} and \eqref{eq:ground} encode the UAVs' energy limitation: a UAV arriving at the collect point ahead of the UGV must hover until it arrives, so it remains airborne for the longer of its own flight and the UGV's relocation, both are bounded by $\overline{\tau}_a$ even if the corresponding travel times increase by $\delta_a$ and $\delta_g$ during the mission. Since the plan selects those times, \eqref{eq:air} and \eqref{eq:ground} also limit how much longer a leg may take: a point can be delayed only as long as the UAVs' energy permits. Constraint \eqref{eq:ground_pts} places all release and collect points on feasible ground, and \eqref{eq:struct} ensures that each entry of a row other than the first and the last is either a point of $\mathcal{P}_{\text{UAV}}$ or a repetition of the previous entry, imposing no restriction on how many points a tour may contain. Constraints \eqref{eq:air_floor}-\eqref{eq:boundary_floor} keep every selected time at or above the fastest traverse between the points involved. Finally, \eqref{eq:prec} enforces the ordering requirement: whenever $p$ must precede $q$, the $p$ is visited at least $\delta_o$ earlier than $q$.

The three parameters, $\delta_a$, $\delta_g$, and $\delta_o$, provide robustness to disturbances that increase travel times during execution, such as terrain effects, wind, or avoidance of previously unknown obstacles. The margins $\delta_a$ and $\delta_g$ help ensure that each tour remains feasible under the UAV endurance constraint, while $\delta_o$ provides robustness against violations of the precedence constraints. For each \((p,q)\in E_{\prec}\), precedence is preserved as long as the accumulated delay at \(p\) exceeds that at \(q\) by no more than $\delta_o$. Larger values of these margins restrict the feasible set to plans with greater robustness to travel-time disturbances, which can increase the minimum achievable makespan.

\section{Proposed Solution}
\label{sec:proposed_solution}
Problem \eqref{opt_problem} is a mixed-integer program (MIP) that generalizes both the multiple traveling salesperson problem (mTSP) \cite{bektas2006mtsp} and the sequential ordering problem (SOP) \cite{escudero1988sop}\footnote{The mTSP asks a set of agents to collectively visit every node at minimum cost; the SOP asks for a minimum-cost Hamiltonian path respecting a partial order on the nodes with a single agent.} it is therefore NP-hard, and exact methods such as branch and cut \cite{mitchell2002branch} become intractable for large $|\mathcal{P}_{\text{UAV}}|$. We propose Alg. \ref{alg:precept}, a polynomial-time heuristic, to solve \eqref{opt_problem} and present its formal guarantees in this section.\footnote{These concern feasibility and worst-case time complexity. No approximation ratio or bound on solution quality relative to the optimum is established.}
\color{black}

\subsection{Proposed Robust Planning Algorithm}
\label{sec:proposed_alg}

The constraint \eqref{eq:prec} is what makes \eqref{opt_problem} hard to divide among the teams. One could avoid the difficulty by assigning each weakly connected component of $G_\prec$ to an individual team and ordering that team's visits consistently with $G_\prec$, after which every team could be planned independently. We do not, because the number of components then bounds how many teams can be assigned points: when $G_\prec$ forms a single component, one team receives the entire mission regardless of $m$, and \textcolor{black}{all remaining teams are forced to stay idle.} Instead, Alg.~\ref{alg:precept} distributes ordered points across teams, so two points with precedence constraint may be visited by different teams. 

Without \eqref{eq:prec}, the points can be assigned among the $m$ teams and each team can then be planned independently. \textcolor{black}{Such a decoupled sequential approach (e.g., \cite{er2026rspect}) is inadequate under \eqref{eq:prec} since precedence constraints between points assigned to different teams couple the team plans through their visit times.} In our approach, the assignment, the sequences, the tours, and the timing are constructed simultaneously by placing one point per iteration. Accordingly, the visit time of each point is known when it is added to a team's plan, and each new visit is planned with respect to the already determined visit times of its predecessors. Fig.~\ref{fig:precept_output} illustrates the output of Alg.~\ref{alg:precept}.

\begin{figure}[htpb]
    \centering
    \includegraphics[width=1.0\linewidth]{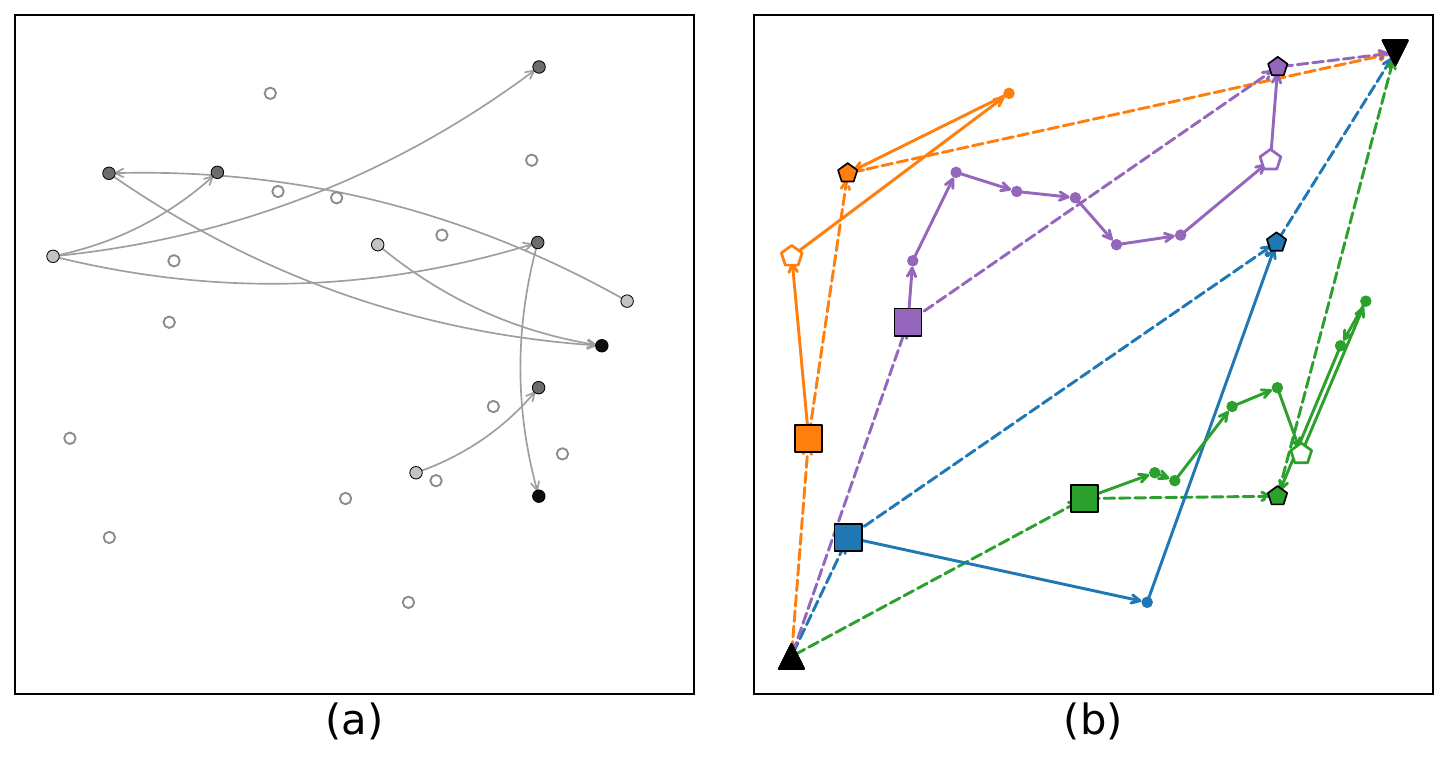}
    \caption{Alg.~\ref{alg:precept} for $n{=}25$ and $m{=}4$. (a)
    $G_\prec {=} (\mathcal{P}_{\text{UAV}}, E_\prec)$, shaded by longest-path depth
    (light for sources, dark for sinks), hollow circles unordered points. (b) The plan
    in space (top view), color-coded by team: squares release points, filled pentagons
    selected collect points, hollow pentagons feasible collect points not taken, solid
    lines UAV flights, dashed lines UGV paths, triangles $p^{\mu}_{\text{s}}$ and
    $p^{\mu}_{\text{f}}$, shared by all teams here. The plan respects the ordering requirements.}
    \label{fig:precept_output}
\end{figure}

\color{blue}
\begin{algorithm}
\renewcommand{\AlCapSty}[1]{\normalfont\footnotesize{\textbf{#1}}\unskip}\scriptsize
\SetAlgoLined
\SetAlgoNoEnd
\DontPrintSemicolon
\SetKwInOut{Input}{Input}
\SetKwInOut{Output}{Output}

\Input{
    $\mathcal{P}_{\text{UAV}}$ ($n$ points to visit), $G_\prec {=} (\mathcal{P}_{\text{UAV}}, E_\prec)$ (Precedence graph),
    $m$ (Number of teams), $\{p^{\mu}_{\text{s}}, p^{\mu}_{\text{f}}\}_{\mu=1}^{m}$ (Team start/finish positions) , $\pi(\cdot)$ (Projection function), $\mathcal{Q}_f^g, \mathcal{Q}_f^a$ (Feasible ground, air space),
    $\underline{\theta}_a, \underline{\theta}_g$ (UAV, UGV travel time lower bounds), $\tau_c$ (Recharge function),
    $\overline{\tau}_a$ (Max. flight time), $\delta_a, \delta_g, \delta_o$ (Robustness margins)
}

\Output{
    $(X, \Theta)$, $X {=} \{X^{\mu}\}_{\mu=1}^{m}$ (Mission plan), $\Theta {=} \{\Theta^{\mu}\}_{\mu=1}^{m}$ (Timing)
}

\caption{\scriptsize Precedence-Aware Robust UAV-UGV Planning}
\label{alg:precept}
\textbf{Initialize:} $\forall \mu$: $X^{\mu} \gets \emptyset$;\ \ $\Theta_a^\mu \leftarrow \mathbf{0}_{n \times (n+1)}$;\ \ $\Theta_g^\mu \leftarrow \mathbf{0}_n$;\ \ $\Theta_{ga}^\mu \leftarrow \mathbf{0}_{n-1}$;\ \ $\theta_s^\mu \gets 0$;\ \ $\theta_f^\mu \gets 0$;\ \ $i_{\mu} \gets 0$;\ \ $j_{\mu} \gets 2$;\ \ $\Gamma_{\mu} \gets 0$;\ \ $\eta_{\mu} \gets 0$;\ \ $C_{\mu} \gets \emptyset$;\ \ $T_{\mu} \gets 0$\;
$\mathcal{A} \gets \emptyset$;\ \ $\mathcal{R} \gets \{q \in \mathcal{P}_{\text{UAV}} \mid (p,q) \notin E_\prec \ \, \forall p \in \mathcal{P}_{\text{UAV}}\}$\;
\While{$\mathcal{R} \neq \emptyset$}{
    \ForEach{$\mu \in \{1,\ldots,m\}$ \textbf{\emph{and}} $q \in \mathcal{R}$}{
        $w_q \gets \max\big(\{0\} \cup \{t_p {+} \delta_o \mid (p,q) \in E_\prec\}\big)$\;
        $C' \gets \emptyset$;\ \ $r \gets \text{null}$;\ \ $c \gets \text{null}$;\ \ $\Gamma' \gets \text{null}$\;
        \If{$i_{\mu} \geq 1$}{
            $e \gets \Gamma_{\mu} + \eta_{\mu} + \underline{\theta}_a(X^{\mu}_{i_\mu,j_\mu-1}, q)$\;
            $\sigma_a \gets \max\big(0,\, w_q - e\big)$;\ \ $\sigma_g \gets 0$\;
            $P \gets \{\pi(X^{\mu}_{i_\mu,k})\}_{2 \leq k < j_{\mu}} \cup \{\pi(q)\}$\;
            $C' \gets \big\{ \hat{c} \in P \ \big|\ \eta_{\mu} {+} \underline{\theta}_a(X^{\mu}_{i_\mu,j_\mu-1},q) {+} \sigma_a {+} \underline{\theta}_a(q,\hat{c}) {+} \delta_a \leq \overline{\tau}_a \ \textbf{and}\ \underline{\theta}_g(X^{\mu}_{i_\mu,1},\hat{c}) {+} \delta_g \leq \overline{\tau}_a \big\}$\;
            $t \gets e + \sigma_a$\;
        }
        \If{$C' = \emptyset$}{
            $r \gets \pi(q)$\;
            \uIf{$i_{\mu} = 0$}{
                $e \gets \underline{\theta}_g(p^{\mu}_{\text{s}}, r) + \underline{\theta}_a(r, q)$\;
                $\sigma_g \gets \max\big(0,\, w_q - e\big)$;\ \ $\sigma_a \gets 0$\;
                $\Gamma' \gets \underline{\theta}_g(p^{\mu}_{\text{s}}, r) + \sigma_g$\;
            }
            \Else{
                $c \gets \mathop{\mathrm{arg\,min}}\limits_{\hat{c} \in C_{\mu}} \Big\{ \hat{b} + \max\big(\underline{\theta}_g(\hat{c},r),\, \tau_c(\hat{b})\big) \ \Big|\ \hat{b} = \max\big(\eta_{\mu} {+} \underline{\theta}_a(X^{\mu}_{i_\mu,j_\mu-1},\hat{c}),\, \underline{\theta}_g(X^{\mu}_{i_\mu,1},\hat{c})\big) \Big\}$\;
                $b \gets \max\big(\eta_{\mu} {+} \underline{\theta}_a(X^{\mu}_{i_\mu,j_\mu-1},c),\, \underline{\theta}_g(X^{\mu}_{i_\mu,1},c)\big)$\;
                $e \gets \Gamma_{\mu} + b + \max\big(\underline{\theta}_g(c,r),\, \tau_c(b)\big) + \underline{\theta}_a(r, q)$\;
                $\sigma_g \gets \max\big(0,\, w_q - e\big)$;\ \ $\sigma_a \gets 0$\;
                $\Gamma' \gets \Gamma_{\mu} + b + \max\big(\underline{\theta}_g(c,r),\, \tau_c(b)\big) + \sigma_g$\;
            }
            $t \gets \Gamma' + \underline{\theta}_a(r, q)$;\ \ $C' \gets \{r\}$\;
        }
        $\langle t, r, c, \sigma_a, \sigma_g, C', \Gamma' \rangle^{\mu,q} \gets \langle t, r, c, \sigma_a, \sigma_g, C', \Gamma' \rangle$\;
    }
    $(\mu^{\star}, q^{\star}) \gets \mathop{\mathrm{arg\,min}}\limits_{\mu \in \{1,\ldots,m\},\, q \in \mathcal{R}} \max\big(t^{\mu,q},\, \max_{\nu \neq \mu} T_{\nu}\big)$\;
    $\langle t, r, c, \sigma_a, \sigma_g, C', \Gamma' \rangle \gets \langle t, r, c, \sigma_a, \sigma_g, C', \Gamma' \rangle^{\mu^{\star}, q^{\star}}$\;
    $\mu \gets \mu^{\star}$;\ \ $q \gets q^{\star}$\;
    \uIf{$r = \text{null}$}{
        $X^{\mu}_{i_\mu,j_\mu} \gets q$\;
        $\theta^{\mu,i_\mu,j_\mu-1}_a \gets \underline{\theta}_a(X^{\mu}_{i_\mu,j_\mu-1}, q) + \sigma_a$\;
        $\eta_{\mu} \gets \eta_{\mu} + \theta^{\mu,i_\mu,j_\mu-1}_a$\;
    }
    \Else{
        \uIf{$i_{\mu} \geq 1$}{
            $X^{\mu}_{i_\mu,\,j_\mu:n+1} \gets X^{\mu}_{i_\mu,j_\mu-1}$;\ \ $X^{\mu}_{i_\mu,n+2} \gets c$\;
            $\theta^{\mu,i_\mu,n+1}_a \gets \underline{\theta}_a(X^{\mu}_{i_\mu,j_\mu-1}, c)$\;
            $\theta^{\mu,i_\mu}_g \gets \underline{\theta}_g(X^{\mu}_{i_\mu,1}, c)$\;
            $b \gets \max(\eta_{\mu} {+} \theta^{\mu,i_\mu,n+1}_a,\, \theta^{\mu,i_\mu}_g)$;\ \ $\theta^{\mu,i_\mu}_{ga} \gets \max\big(\underline{\theta}_g(c,r),\, \tau_c(b)\big) + \sigma_g$\;
        }
        \Else{
            $\theta^{\mu}_{s} \gets \underline{\theta}_g(p^{\mu}_{\text{s}}, r) + \sigma_g$\;
        }
        $i_{\mu} \gets i_{\mu}+1$;\ \ $j_{\mu} \gets 2$;\ \ $\Gamma_{\mu} \gets \Gamma'$\;
        $X^{\mu}_{i_\mu,1} \gets r$;\ \ $X^{\mu}_{i_\mu,2} \gets q$\;
        $\theta^{\mu,i_\mu,1}_a \gets \underline{\theta}_a(r, q)$;\ \ $\eta_{\mu} \gets \theta^{\mu,i_\mu,1}_a$\;
    }
    $C_{\mu} \gets C'$;\ \ $j_{\mu} \gets j_{\mu}+1$;\ \ $t_q \gets t$;\ \ $T_{\mu} \gets t$\;
    $\mathcal{A} \gets \mathcal{A} \cup \{q\}$\;
    $\mathcal{R} \gets \big(\mathcal{R} \setminus \{q\}\big) \cup \{s \mid (q,s) \in E_\prec,\ \, p \in \mathcal{A} \ \forall (p,s) \in E_\prec\}$\;
}

\color{black}
\For{\(\mu \gets 1\) \KwTo \(m\)}{
    \eIf{\(i_\mu \ge 1\)}{
        \(c \gets \mathop{\mathrm{arg\,min}}\limits_{\hat{c} \in C_{\mu}} \Big\{ \hat{b} + \underline{\theta}_g(\hat{c}, p^{\mu}_{\text{f}}) \ \Big|\ \hat{b} = \max\big(\eta_{\mu} {+} \underline{\theta}_a(X^{\mu}_{i_\mu,j_\mu-1},\hat{c}),\, \underline{\theta}_g(X^{\mu}_{i_\mu,1},\hat{c})\big) \Big\}\)\;
        \(X^{\mu}_{i_\mu,\,j_\mu:n+1} \gets X^{\mu}_{i_\mu,j_\mu-1}\);\ \ \(X^{\mu}_{i_\mu,n+2} \gets c\)\;
        \(\theta^{\mu,i_\mu,n+1}_a \gets \underline{\theta}_a(X^{\mu}_{i_\mu,j_\mu-1}, c)\)\;
        \(\theta^{\mu,i_\mu}_g \gets \underline{\theta}_g(X^{\mu}_{i_\mu,1}, c)\);\ \ \(\theta^{\mu}_{f} \gets \underline{\theta}_g(c, p^{\mu}_{\text{f}})\)\;
        \(X^{\mu}_{i_\mu+1:n,\,:} \gets c\)\;
    }{
        \(X^{\mu}_{1:n,\,:} \gets p^{\mu}_{\text{f}}\)\;
        \(\theta^{\mu}_s \gets \underline{\theta}_g(p^{\mu}_{\text{s}}, p^{\mu}_{\text{f}})\)\;
    }
}
\Return{$(X, \Theta)$}
\end{algorithm}
\color{black}
Alg.~\ref{alg:precept} operates iteratively, placing one monitoring point per loop (lines 3-47). Line 1 initializes the state for each team $\mu$: the plan matrix $X^\mu$ and timing matrices $\Theta_a^\mu, \Theta_g^\mu, \Theta_{ga}^\mu$ alongside their start/finish scalars $\theta_s^\mu, \theta_f^\mu$ strictly to zero, the active tour (row) index $i_\mu$, the column index $j_\mu$, the clock $\Gamma_\mu$ tracking the time the current tour is released from the UGV, the accumulated flight time $\eta_\mu$ of the tour, the candidate collect points $C_\mu$ of the tour, and the team's accumulated cost $T_\mu$. Line 2 initializes the placed set $\mathcal{A}$ and the ready set $\mathcal{R}$, holding the points whose required predecessors in $G_\prec$ are in $\mathcal{A}$.

Lines 4-26 evaluate every pairing of a team $\mu$ and a ready point $q$. Line 5 computes $w_q$, the earliest visit time permitted by the precedence constraints, ensuring $q$ occurs at least $\delta_o$ after all its placed predecessors. Line 6 initializes local provisional variables: candidate collect set $C'$, next release point $r$, collect point $c$, and tour release clock $\Gamma'$. 

Line 7 checks if team $\mu$ currently has an open tour ($i_\mu \ge 1$). If so, line 8 calculates the express time $e$, the earliest possible arrival at $q$, by adding the accumulated flight $\eta_\mu$ and the new flight leg to the release clock $\Gamma_\mu$. Line 9 calculates the airborne wait $\sigma_a$ necessary to delay this arrival until $w_q$, setting the ground wait $\sigma_g$ to 0. Line 10 gets the potential collect points $P$, the ground projections of all air points currently in the tour plus $q$. Line 11 filters $P$ to retain only the subset $C'$ that maintains UAV and UGV energy feasibility when $\sigma_a$ is included. Line 12 sets the final arrival time $t$ at $q$ by adding the airborne wait $\sigma_a$ to the express time $e$.

Line 13 checks if $C'$ is empty, which occurs if $q$ violates the energy check (line 11) or if it is the team's first tour. If empty, $q$ opens a new tour, and line 14 sets the new release point $r$ to $q$'s feasible ground projection $\pi(q)$. If this is the team's first tour (line 15), line 16 calculates the express time $e$ by summing the fastest UGV drive from $p_s^\mu$ to $r$ and the fastest UAV flight from $r$ to $q$. Line 17 calculates the required ground wait $\sigma_g$, setting $\sigma_a$ to 0. Line 18 sets the provisional release clock $\Gamma'$ by adding the drive time and the wait $\sigma_g$. If a prior tour is already open (line 19), it must be closed first. Line 20 selects a collect point $c \in C_\mu$ that minimizes the total time to finish the closing tour, recharge the UAV, and drive the UGV to the new release point $r$. Line 21 defines the closing tour's airborne time $b$ as the longer of the UGV's drive to $c$ and the UAV's total flight (the accumulated flight $\eta_\mu$ plus the landing leg from the latest point to $c$). Line 22 gets the express time $e$ at $q$ from the release clock $\Gamma_\mu$, adding the airborne time $b$, the gap time (the longer of the UGV transit to $r$ or the UAV recharge), and the initial flight to $q$. Line 23 calculates the required ground wait $\sigma_g$. Line 24 calculates the provisional release clock $\Gamma'$ for the new tour by adding the flight, gap time, and wait $\sigma_g$ to the release clock $\Gamma_\mu$. Line 25 sets the final arrival time $t$ at $q$ and sets the new tour's candidate collect set to $\{r\}$. Line 26 stores this evaluated option.

Line 27 selects the pairing \((\mu^\star, q^\star)\) minimizing the maximum accumulated team cost \(T_\nu\) (breaking ties lexicographically, if applicable). This distributes the workload rather than greedily visiting individual points early. Lines 28 and 29 unpack and commit this winning option.

Lines 30-47 commit the selected pairing. The algorithm prioritizes the standard case: line 30 checks if $r = \text{null}$, meaning $q$ simply joins the existing open tour. If so, line 31 writes $q$ into the next free matrix entry, and lines 32-33 apply the airborne wait $\sigma_a$ to the inbound flight leg and update the accumulated flight $\eta_\mu$. Conversely, if $r \neq \text{null}$ (line 34), a new tour is opened. Line 35 checks if a currently open tour must be closed first. If so, line 36 pads the unused air columns with the current location $X^{\mu}_{i_\mu,j_\mu-1}$, and lines 37-39 write the landing leg, UGV relocation, and transport timing, explicitly including the ground wait $\sigma_g$ in the transport time. For a team's first tour, line 41 instead writes the initial UGV drive time plus the wait $\sigma_g$ to $\theta_s^\mu$. Lines 42-44 then advance the team state: the row index $i_\mu$ increments, the column index $j_\mu$ resets to 2, the active release clock $\Gamma_\mu$ updates to the provisional clock $\Gamma'$, the accumulated flight $\eta_\mu$ is reset, and the new tour is initialized with $r$ and $q$ in the first two entries of the new row. Line 45 then finalizes the shared state updates: it commits the candidate collect set $C_\mu$, increments the column index $j_\mu$, sets the final visit time $t_q$, and updates the team's cost $T_\mu$. Lines 46-47 add $q$ to $\mathcal{A}$ and update $\mathcal{R}$ by admitting any successors whose predecessors are now all placed.

\color{black}
Lines 48-58 execute a closure pass once \(\mathcal{R} = \emptyset\). For active teams (\(i_\mu \ge 1\)), line 50 selects the final collect point \(c \in C_\mu\) minimizing the drive to \(p_f^\mu\). Line 51 closes the final active row, and lines 52-53 assign the landing leg, final UGV relocation, and final transport timing \(\theta_f^\mu\). If a team remains idle, lines 56-57 bypass these steps, routing the UGV directly from \(p_s^\mu\) to \(p_f^\mu\). Lines 54 and 56 pad unused matrix entries, and line 58 returns the plan \(X\) and timing \(\Theta\).

\color{black}

\subsection{Formal Guarantees}
\label{sec:guarantees}

We now state formal guarantees on the feasibility of the solution returned by
Alg.~\ref{alg:precept} and on its worst-case complexity.

\begin{assumption}Every point \(q \in \mathcal{P}_{\text{UAV}}\) admits a feasible single-point tour from its ground projection \(\pi(q)\) i.e., \(\underline{\theta}_a(\pi(q), q) + \underline{\theta}_a(q, \pi(q)) + \delta_a \le \overline{\tau}_a\) and \(\underline{\theta}_g(\pi(q), \pi(q)) + \delta_g \le \overline{\tau}_a\).\footnote{This is a mild condition: the UAVs must at least have the energy to complete a single-point tour (taking off from a stationary UGV, visiting the point, and landing back on the UGV).}
\label{assump1}
\end{assumption}

\begin{theorem}
\label{thm:main}
Under Assumption~\ref{assump1}, Alg.~\ref{alg:precept} returns a feasible solution to \eqref{opt_problem} if one
exists, and has worst-case time complexity $\mathcal{O}(mn^3)$, where $m$ is the number
of UAV-UGV teams and $n {=} |\mathcal{P}_{\text{UAV}}|$ is the number of monitoring points.
\end{theorem}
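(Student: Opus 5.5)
We split the proof into a feasibility part and a complexity part. For feasibility we argue that, under Assumption~\ref{assump1}, the algorithm never gets stuck and that the returned pair $(X,\Theta)$ satisfies each constraint \eqref{eq:cov}--\eqref{eq:prec}. Since the algorithm always returns a plan, this yields the ``if one exists'' clause; in fact Assumption~\ref{assump1} alone already guarantees existence.

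\emph{Termination and coverage.} First we show by induction on iterations that $\mathcal{R}\neq\emptyset$ whenever $\mathcal{A}\neq\mathcal{P}_{\text{UAV}}$. Because $G_\prec$ is a DAG, the subgraph induced on $\mathcal{P}_{\text{UAV}}\setminus\mathcal{A}$ has a source. All predecessors of that source lie in $\mathcal{A}$, so by the update in line 47 it belongs to $\mathcal{R}$. Each iteration moves exactly one point from $\mathcal{R}$ to $\mathcal{A}$. Hence the loop runs exactly $n$ times and every point is written into some $X^\mu_{i,j}$, which gives \eqref{eq:cov}. Each iteration either fills one column or opens one row, so $i_\mu\le n$ and $j_\mu\le n+1$, and the matrix dimensions are respected.

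\emph{Energy and structural constraints.} We maintain the invariant that, for the open tour of each active team, $C_\mu$ is nonempty and every $\hat c\in C_\mu$ satisfies
\[
\eta_\mu+\underline{\theta}_a(X^\mu_{i_\mu,j_\mu-1},\hat c)+\delta_a\le\overline{\tau}_a
\quad\text{and}\quad
\underline{\theta}_g(X^\mu_{i_\mu,1},\hat c)+\delta_g\le\overline{\tau}_a .
\]
Joining an existing tour keeps the invariant by the filter in line 11, since the committed $\eta_\mu$ includes $\sigma_a$. Opening a new tour sets $C_\mu=\{\pi(q)\}$ and $\eta_\mu=\underline{\theta}_a(\pi(q),q)$, and the invariant then holds by Assumption~\ref{assump1}. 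This is the only place the assumption is needed, and it guarantees that the fallback branch at line 13 is always viable.

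Closing a tour (lines 36--39 or 50--53) selects $c\in C_\mu$ and sets $\tau_a=\eta_\mu+\underline{\theta}_a(\cdot,c)$ and $\tau_g=\underline{\theta}_g(X_{i,1},c)$. The invariant then gives \eqref{eq:air} and \eqref{eq:ground}. Padded rows are constant, so they have zero timing and satisfy the constraints trivially. Release and collect points are images of $\pi$ or equal to $p_{\text{f}}^\mu$, which gives \eqref{eq:ground_pts}. The padding in lines 36, 51 and 54 gives \eqref{eq:struct}. Every timing entry is a lower bound plus a nonnegative $\sigma$, which gives \eqref{eq:air_floor}--\eqref{eq:boundary_floor}.

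\emph{Precedence.} This is the step we expect to be the main obstacle. We must show that the stored $t_q$ equals the true visit time $t_q(X,\Theta)$ implied by the committed timing, including the $\max$ with $\tau_c$ in \eqref{eq:objective_function} and the gap legs. To do so we check that $\Gamma_\mu$ tracks the release time exactly: line 24 matches the sum of the closing tour's $\max(\tau_a,\tau_g)$ and the transport term $\max(\theta_{ga},\tau_c)$. Here we must verify that adding $\sigma_g$ inside $\theta_{ga}$ does not interact badly with the outer $\max$, which holds because $\theta_{ga}\ge\tau_c(b)$ already. We also check that committed flight legs reproduce $e+\sigma_a$. Given this, $t_q\ge w_q\ge t_p+\delta_o$ for every $(p,q)\in E_\prec$, since every predecessor $p$ is placed before $q$ and its time is fixed thereafter. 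Later insertions never alter earlier legs, because only future columns or rows are written. Each point is visited exactly once, so the earliest visit equals $t_q$, and \eqref{eq:prec} follows.

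\emph{Complexity.} There are $n$ iterations, and each evaluates at most $m\cdot|\mathcal{R}|\le mn$ pairs. For each pair, line 5 costs $O(n)$ over the predecessors. Line 11 filters $|P|\le n$ candidates, each in $O(1)$ assuming oracle access to $\underline{\theta}$, $\pi$ and $\tau_c$. Line 20 minimizes over $|C_\mu|\le n$ candidates. So each pair costs $O(n)$ and each iteration $O(mn^2)$. The argmin in line 27 costs $O(mn)$ if $\max_{\nu\neq\mu}T_\nu$ is obtained from the top two values of $T$. The commit step costs $O(n)$, including the padding and the update of $\mathcal{R}$ over the edges of $q$ with incremental predecessor counts. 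Summing over the $n$ iterations, plus an $O(mn^2)$ closure pass, gives $\mathcal{O}(mn^3)$.
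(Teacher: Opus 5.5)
Your proposal is correct and follows the same route as the paper's proof. Acyclicity keeps the ready set nonempty, which gives termination and \eqref{eq:cov}. The line-11 filter together with Assumption~\ref{assump1} gives the energy constraints, and $\pi$ plus padding gives \eqref{eq:ground_pts}--\eqref{eq:struct}. Nonnegative waits give the timing floors, $t\ge w_q$ gives \eqref{eq:prec}, and an $n\cdot mn\cdot\mathcal{O}(n)$ count gives the complexity. Three of your additions tighten steps the paper handles informally: the explicit invariant on $C_\mu$, the check that the stored $t_q$ equals the visit time induced by the committed $\Theta$ (the paper simply assumes this), and the top-two trick that keeps line 27 at $\mathcal{O}(mn)$ rather than a naive $\mathcal{O}(m^2+mn)$.
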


\begin{proof}\textit{(Feasibility.)} Alg.~\ref{alg:precept} iteratively constructs the mission plan via a while loop (lines 3-47). Since $G_\prec$ is a DAG, the shrinking subset of unplaced points always contains at least one point with no unplaced predecessors; by construction (lines 2, 47), this point belongs to the ready set $\mathcal{R}$. Each iteration assigns one point $q \in \mathcal{R}$ to a team. The algorithm appends $q$ to an open tour if energy constraints permit (lines 11, 30-33), or otherwise opens a new tour at the ground projection $\pi(q)$ (lines 13-14, 34-44). Under Assumption ~\ref{assump1}, this new tour is energy-feasible. Since a valid assignment is always available, the loop terminates in exactly $n$ iterations, placing every monitoring point into exactly one team's plan, satisfying \eqref{eq:cov}.
\color{black}

If $q$ joins an open tour, the algorithm enforces energy-feasibility (line 11) before committing the point to the plan (lines 30-33). If $q$ instead opens a new tour, the release and candidate collect points are set to $\pi(q)$ (lines 14, 25) and committed to the plan (lines 34-44). Under Assumption \ref{assump1}, this tour is also energy-feasible, satisfying \eqref{eq:air} and \eqref{eq:ground}.

If appending to an open tour, candidate collect points $P$ are generated using the feasible ground projection $\pi(\cdot)$ (line 10) and filtered into a subset $C'$ (line 11). If opening a new tour instead, the release point is set to the ground projection $\pi(q)$ (line 14) and $C'$ is populated with this projected point (line 25). In both cases, $C'$ updates the team's candidate set $C_\mu$ (line 45). Since the collect point $c$ is selected from $C_\mu$ (lines 20, 50), every release and collect point of the plan (lines 36, 43, 51, 54, 56) is from $\mathcal{Q}_f^g$, satisfying \eqref{eq:ground_pts}. 

The target $q \in \mathcal{P}_{\text{UAV}}$ is added to the tour either when appending to an open tour (line 31) or when opening a new tour (line 43). Unused entries within an active row are padded with the last visited point $X^{\mu}_{i_\mu,j_\mu-1}$ (lines 36, 51), while any completely unused subsequent rows are padded with the final collect point $c$ or $p_f^\mu$ (lines 54, 56), satisfying \eqref{eq:struct}.

The algorithm computes wait times $\sigma_a, \sigma_g \ge 0$ (lines 9, 17, 23). For active legs, each timing variable in $\Theta$ is initialized to zero (line 1) and assigned either the fastest travel time, or the fastest travel time plus the corresponding wait (lines 32, 37-39, 41, 44, 52-53, 57). Since the matrices are zero-initialized and $\sigma_a, \sigma_g \ge 0$, every active duration is greater than or equal to its fastest travel time, and padded edges remain zero, satisfying \eqref{eq:air_floor}, \eqref{eq:ground_floor}, \eqref{eq:transport_floor}, and \eqref{eq:boundary_floor}.

A point $q$ enters the ready set $\mathcal{R}$ only after all of its predecessors are placed (line 47). Since every placed predecessor already has a known visit time $t_p$ from a previous iteration (line 45), the algorithm can compute a lower bound $w_q$ that strictly enforces the ordering margin $\delta_o$ against all of them (line 5). By calculating and imposing the wait times $\sigma_a, \sigma_g$ (lines 9, 17, 23), the resulting candidate visit time $t$ is forced to satisfy $t \ge w_q$ (lines 12, 25). When $q$ is assigned to a team, its final visit time $t_q$ is committed as $t$ (line 45). Thus, $t_q \ge t_p + \delta_o$ for all $(p,q) \in E_\prec$, satisfying \eqref{eq:prec}.

Therefore, under Assumption \ref{assump1}, Algorithm \ref{alg:precept} returns a feasible solution to \eqref{opt_problem}, if one exists. 

\emph{(Complexity.)} Alg. 1 initializes the ready set $\mathcal{R}$ (line 2), taking $\mathcal{O}(n + \vert E_{\prec} \vert)$ time to scan $G_{\prec}$.

The main \texttt{while} loop (line 3) terminates in $n$ iterations. Within each iteration, a nested \texttt{foreach} loop (lines 4-26) evaluates all pairings of team $\mu \in \{1,\dots,m\}$ and $q \in \mathcal{R}$. Since $\vert \mathcal{R} \vert \le n$, this inner loop checks at most $mn$ options.

During this \texttt{foreach}, computing $w_q$ (line 5) across all $q \in \mathcal{R}$ for a given team takes $\mathcal{O}(\vert E_{\prec} \vert)$ time. Checking energy feasibility (lines 10-11) requires finding feasible ground projections $\pi(\cdot)$ for each point in the active tour of at most $n$ points. Assuming a bounded number of obstacle faces, each projection takes fixed time, requiring $\mathcal{O}(n)$ time per pairing. Selecting a collect point $c \in C_\mu$ (line 20) scans at most $n$ candidates, adding $\mathcal{O}(n)$ time per pairing. All remaining calculations and assignments within the loop take fixed time. Consequently, the inner loop operations require a total of $\mathcal{O}(m\vert E_{\prec} \vert + mn^2)$ time per outer iteration.

After the \texttt{foreach}, selecting the $(\mu^\star, q^\star)$ (line 27) takes $\mathcal{O}(mn)$ time per iteration. Committing this selection and updating the state matrices (lines 28-45) requires padding up to $n$ column entries (line 36), taking $\mathcal{O}(n)$ time per iteration.

Updating $\mathcal{R}$ (line 47) takes an aggregate $\mathcal{O}(n + \vert E_{\prec} \vert)$ time across all $n$ iterations by tracking in-degrees.

Finally, a \texttt{for} loop over the $m$ teams (lines 48-58) scans at most $n$ candidates to select the final collect point (line 50), taking $\mathcal{O}(n)$ time per team. The remaining matrix assignments (lines 51, 54, 56) pad up to $n$ rows of length $n+2$, taking $\mathcal{O}(n^2)$ time per team, requiring a total of $\mathcal{O}(mn^2)$ time.

Summing these yields \(\mathcal{O}(n + \vert E_{\prec} \vert + mn\vert E_{\prec} \vert + mn^3 + mn^2 + n^2 + mn^2)\). Since \(G_{\prec}\) is a directed acyclic graph, \(\vert E_{\prec} \vert \le n(n-1)/2\). Thus, the \(\mathcal{O}(mn^3)\) term dominates, yielding an overall worst-case time complexity of \(\mathcal{O}(mn^3)\).
\end{proof}
\color{black}

\section{Numerical Evaluations}
\label{sec:numerical}
We evaluate Alg. \ref{alg:precept} on: (i) scalability, (ii) precedence density effect, (iii) robustness margin effect, and compare against (iv) branch  and cut, (v) simulated annealing, and a (vi) decoupled precedence greedy baseline. Python 3.9 on running on Intel i9-13900H CPU with 16GB RAM was used.

The environment is \eqref{eq:env}, and $\mathcal{P}_{\text{UAV}}$ is drawn
uniformly from $\mathcal{Q}^a_f$ at altitude $\underline z$. A VTOL
UAV is used, flying the horizontal component $\ell_H$ of each leg at speed $s_H$ and the vertical component $\ell_V$ at speed
$s_V < s_H$ \cite{guo2020vertical}.\footnote{In practice, most UAVs have higher
horizontal speeds than vertical.} $G_\prec$ is generated by drawing a random permutation of
$\mathcal{P}_{\text{UAV}}$ and adding $\lfloor d n \rceil$ edges between random
pairs, each directed from the earlier point in the permutation to the later, so
no cycle can form.

Unless otherwise specified, (i) all results\footnote{For brevity, the makespan $\max_\mu \tau({X}^\mu, \Theta^\mu)$ is denoted by $\max_\mu \tau$.} are reported as mean $\pm$ standard deviation over \textit{25 randomly generated instances}, each with its own $G_\prec=(\mathcal{P}_{\text{UAV}}, E_\prec)$, and (ii) the parameters of Table~\ref{tab:params} are used.
\color{black}

\begin{table}[htbp]
\centering
\caption{Mission and Method Parameters}
\label{tab:params}
\begin{tabularx}{\columnwidth}{@{}l X@{}}
\hline
Parameter & Value \\
\hline
$\bar x, \bar y$ (Environment size) & 4000 m \\
$\underline z$ (Monitoring height) & 100 m \\
$\{p^{\mu}_{\text{s}}, p^{\mu}_{\text{f}}\}_{\mu=1}^{m}$ (Start, finish positions) & $(0,0,0)$, $(4000,4000,0)$ \\
$d$ (Precedence density) & $0.5$ \\
$\ell(u,v)$ (Shortest path) & $\lVert u {-} v \rVert_2$ \\
$s_H, s_V$ (UAV cruise, climb speed) & 10 m/s, 2 m/s \\
$s_g$ (UGV speed) & 2.5 m/s \\
$\underline{\theta}_a(u,v)$ (UAV fastest travel time) & $\ell_H(u,v)/s_H + \ell_V(u,v)/s_V$ \\
$\underline{\theta}_g(u,v)$ (UGV fastest travel time) & $\ell(u,v)/s_g$ \\
$\tau_c(\cdot)$ (Recharge function) & $\gamma \max\big(\tau_a(X^\mu_i, \Theta^\mu),\, \tau_g(X^\mu_i, \Theta^\mu)\big)$ \\
$\gamma$ (Recharge ratio) & 1 \\
$\overline{\tau}_a$ (Max flight time) & 600 s \\
$\delta_a, \delta_g$ (Energy robustness margins) & 0 s \\
$\delta_o$ (Ordering robustness margin) & 0 s \\
\hline
\end{tabularx}
\end{table}

\subsection{Scalability}
\label{subsec:scalability}

Table~\ref{tab:scalability} evaluates the scalability of Alg.~\ref{alg:precept} across \(n \in \{25, 50, 75, 100\}\) and \(m \in \{1, 2, 4, 10\}\). Empirical computation times scale at \(n^{2.15}\)--\(n^{2.26}\) and \(m^{0.62}\)--\(m^{0.68}\), consistent with the theoretical \(\mathcal{O}(mn^3)\) bound (Thm.~\ref{thm:main}). Makespan predictably increases with \(n\) and decreases as \(m\) parallelizes the teams. Alg. \ref{alg:precept} reliably computes large-scale instances, solving the most demanding configuration (\(n {=} 100\), \(m {=} 10\)) in under $0.35$\,s.

\begin{table}[htpb]
\centering
\caption{Makespan (\(\max_\mu \tau\)) and computation time under Alg.~\ref{alg:precept} across numbers of points ($n$) and numbers of teams ($m$)}
\label{tab:scalability}
\setlength{\tabcolsep}{3pt}
\footnotesize
\begin{tabular}{c rrrr}
\toprule
 & \multicolumn{2}{c}{\(m=1\)} & \multicolumn{2}{c}{\(m=2\)} \\
\cmidrule(lr){2-3}\cmidrule(lr){4-5}
\(n\) & \(\max_\mu \tau\) (s) & Comp. (s) & \(\max_\mu \tau\) (s) & Comp. (s) \\
\midrule
 25 &  8900 \(\pm\) 1200 & 0.004 \(\pm\) 0.001 & 5400 \(\pm\) 850 & 0.006 \(\pm\) 0.001 \\
 50 & 10900 \(\pm\) 1700 & 0.015 \(\pm\) 0.002 & 6800 \(\pm\) 810 & 0.026 \(\pm\) 0.003 \\
 75 & 13300 \(\pm\) 1400 & 0.038 \(\pm\) 0.004 & 7500 \(\pm\) 760 & 0.064 \(\pm\) 0.006 \\
100 & 15100 \(\pm\) 1600 & 0.071 \(\pm\) 0.005 & 8300 \(\pm\) 880 & 0.126 \(\pm\) 0.015 \\
\midrule
 & \multicolumn{2}{c}{\(m=4\)} & \multicolumn{2}{c}{\(m=10\)} \\
\cmidrule(lr){2-3}\cmidrule(lr){4-5}
\(n\) & \(\max_\mu \tau\) (s) & Comp. (s) & \(\max_\mu \tau\) (s) & Comp. (s) \\
\midrule
 25 & 4000 \(\pm\) 570 & 0.009 \(\pm\) 0.001 & 3500 \(\pm\) 290 & 0.015 \(\pm\) 0.002 \\
 50 & 4400 \(\pm\) 470 & 0.042 \(\pm\) 0.004 & 3500 \(\pm\) 320 & 0.072 \(\pm\) 0.007 \\
 75 & 4900 \(\pm\) 480 & 0.107 \(\pm\) 0.011 & 3600 \(\pm\) 230 & 0.181 \(\pm\) 0.016 \\
100 & 5300 \(\pm\) 540 & 0.205 \(\pm\) 0.021 & 3900 \(\pm\) 310 & 0.342 \(\pm\) 0.033 \\
\bottomrule
\end{tabular}
\end{table}

\subsection{Effect of Precedence Density}
\label{subsec:precedence_eval}
We evaluate precedence densities $d \in \{0.5, 1, 3, 8, 20\}$ for $m \in \{1, 4, 10\}$ and $n {=} 100$, spanning $50$ to $2000$ precedence edges ($|E_\prec|$), summarized in Table~\ref{tab:precedence}. Increasing $d$ from $0.5$ to $20$ severely restricts routing freedom; the mean size of the ready set $\mathcal{R}$ drops from $31.6$ to $2.0$, and the longest chain ($c$) forces $51\%$ of the points into a sequential bottleneck ($c/n {=} 0.51$). To counter this, Alg.~\ref{alg:precept} parallelizes teams. Even when dense edges (high $d$) merge the entire $G_\prec$ into a single connected component, the algorithm continues to distribute points across teams. Consequently, expanding from $m {=} 1$ to $m {=} 10$ teams achieves an $89\%$ makespan reduction at $d {=} 20$, exceeding the $74\%$ reduction observed in $d {=} 0.5$.

\color{blue}
\begin{table}[htpb]
\centering
\caption{Makespan (\(\max_\mu \tau\)) of Alg.~\ref{alg:precept} across precedence densities (\(d\)) and numbers of teams (\(m\)) at \(n{=}100\). \(c\) is the longest chain in \(G_\prec\), and Red.\ is the makespan reduction from \(m{=}1\) to \(m{=}10\)}
\label{tab:precedence}
\setlength{\tabcolsep}{3pt}
\footnotesize
\begin{tabular*}{\columnwidth}{@{\extracolsep{\fill}}cc ccc c@{}}
\toprule
 & & \multicolumn{3}{c}{\(\max_\mu \tau\) (s)} & \\
\cmidrule(lr){3-5}
\(d\) & \(c/n\) & \(m{=}1\) & \(m{=}4\) & \(m{=}10\) & Red.\ (\%) \\
\midrule
 0.5 & 0.04 & 15100 \(\pm\) 1600 &  5300 \(\pm\)  540 & 3900 \(\pm\) 310 & 74 \\
 1   & 0.06 & 17900 \(\pm\) 1600 &  6200 \(\pm\)  570 & 4200 \(\pm\) 260 & 77 \\
 3   & 0.12 & 27600 \(\pm\) 2500 &  7600 \(\pm\)  530 & 4900 \(\pm\) 240 & 82 \\
 8   & 0.26 & 40400 \(\pm\) 4600 &  9900 \(\pm\)  890 & 5500 \(\pm\) 360 & 86 \\
 20  & 0.51 & 57100 \(\pm\) 4800 & 12000 \(\pm\) 1100 & 6000 \(\pm\) 410 & 89 \\
\bottomrule
\end{tabular*}
\end{table}

\color{black}

\subsection{Effect of Robustness Margins}
\label{subsec:robustness_eval}
We evaluate the energy margins ($\delta_a {=} \delta_g \in [0, 490]$~s) and the ordering margin ($\delta_o \in [0, 1600]$~s) independently for $m \in \{1, 2, 4, 10\}$ teams at $n {=} 100$, summarized in Fig.~\ref{fig:robustness}. These margins affect parallelism differently. Increasing energy margins shortens tours, but these tours remain easy to distribute; thus, the makespan reduction from $m {=} 1$ to $m {=} 10$ remains stable around $75\%$ to $80\%$. Conversely, increasing the ordering margin forces teams to wait, causing the makespan curves to converge. At large $\delta_o$, the makespan reduction from $m {=} 1$ to $m {=} 10$ drops to $58\%$, and the benefit of expanding from $m {=} 4$ to $m {=} 10$ shrinks from $28\%$ to $8\%$. Small ordering margins ($\delta_o \le 200$\,s) are absorbed with almost no extra time. Ultimately, the margins constrain the problem differently: $\delta_a$ and $\delta_g$ impose spatial costs that multiple teams can divide, whereas $\delta_o$ imposes strict wait times that inherently limit parallel efficiency.
\color{black}

\begin{figure}[htpb]
    \centering
    \includegraphics[width=1.0\linewidth]{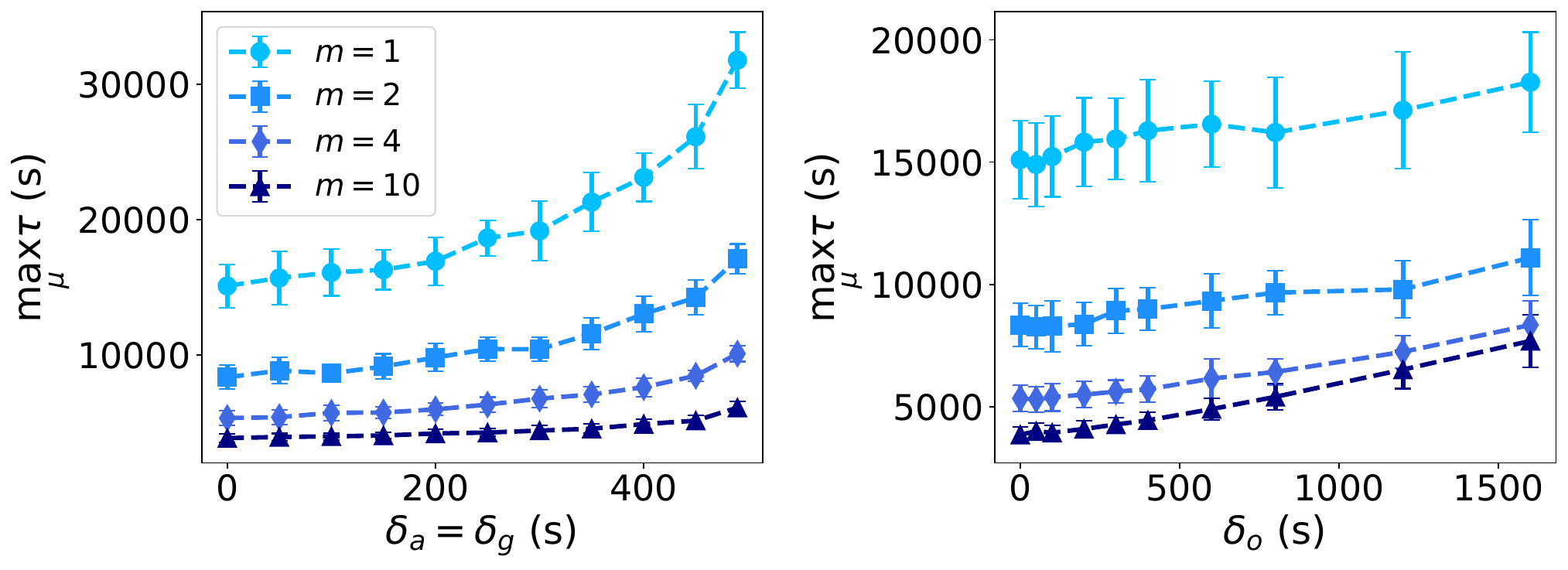}
    \caption{Makespan under Alg.~\ref{alg:precept} against the energy margins
    $\delta_a {=} \delta_g$ (left) and the ordering margin $\delta_o$ (right),
    at $n {=} 100$, each sweep holding the remaining margins at
    zero. Error bars are $\pm 1$ std.}
    \label{fig:robustness}
\end{figure}

\subsection{Comparison to Branch and Cut}
\label{subsec:bc}

We encode \eqref{opt_problem} as a Mixed-Integer Quadratically Constrained Program (MIQCP) solved with Branch \& Cut (B\&C) via Gurobi\footnote{MipGap: 5\%, TimeLimit: 600\,s, FeasibilityTol: \(10^{-4}\), IntFeasTol: \(10^{-5}\), OptimalityTol: \(10^{-3}\), NonConvex: 2} to establish a near-optimal reference (Table~\ref{tab:bc_comparison}). As expected, B\&C experiences combinatorial explosion, therefore, we use $n \in \{2,3,4\}$ and $m\in\{1,2\}$ for tractability. As \(n\) increases from 2 to 4, computation time scales exponentially, surging to over 200 seconds for \(m=2\) at \(n=4\) to reach the 5\% optimality threshold. In contrast, Alg.~\ref{alg:precept} computes in less than a millisecond across all configurations, achieving makespans within roughly \(8\%\) to \(28\%\) of near-optimal.

\begin{table}[htpb]
\centering
\caption{Makespan (\(\max_\mu \tau\)), computation time, and optimality gap of the Branch and Cut (B\&C) against Alg.~\ref{alg:precept} across different number of points \((n)\) and team sizes (\(m\))}
\label{tab:bc_comparison}
\setlength{\tabcolsep}{3pt}
\footnotesize
\resizebox{\columnwidth}{!}{%
\begin{tabular}{cc rrr rr}
\toprule
 & & \multicolumn{3}{c}{Branch \& Cut} & \multicolumn{2}{c}{Alg.~\ref{alg:precept}} \\
\cmidrule(lr){3-5} \cmidrule(lr){6-7}
\(m\) & \(n\) & \(\max_\mu \tau\) (s) & Comp. (s) & Gap (\%) & \(\max_\mu \tau\) (s) & Comp. (s) \\
\midrule
1 & 2 & 2800 \(\pm\) 380 & 0.19 \(\pm\) 0.12 & 1.2 \(\pm\) 2.0 & 3100 \(\pm\) 750 & \(<0.001\) \\
  & 3 & 2800 \(\pm\) 360 & 1.4 \(\pm\) 1.0 & 1.5 \(\pm\) 1.9 & 3200 \(\pm\) 800 & \(<0.001\) \\
  & 4 & 3100 \(\pm\) 330 & 19 \(\pm\) 16 & 2.1 \(\pm\) 1.8 & 3900 \(\pm\) 890 & \(<0.001\) \\
\midrule
2 & 2 & 2600 \(\pm\) 290 & 0.49 \(\pm\) 0.33 & 1.5 \(\pm\) 1.8 & 2800 \(\pm\) 310 & \(<0.001\) \\
  & 3 & 2600 \(\pm\) 330 & 13 \(\pm\) 12 & 2.8 \(\pm\) 2.2 & 2800 \(\pm\) 210 & \(<0.001\) \\
  & 4 & 2500 \(\pm\) 360 & 210 \(\pm\) 110 & 1.5 \(\pm\) 1.6 & 3200 \(\pm\) 490 & \(<0.001\) \\
\bottomrule
\end{tabular}%
}
\end{table}

\subsection{Comparison to Simulated Annealing}
We compare Alg. \ref{alg:precept} against Simulated Annealing (SA) using a logarithmic cooling schedule \cite{nourani1998comparison} with \(T_{min}=0.1\) (Table \ref{tab:sa_comparison}). We restrict this evaluation to \(n=15\) and $m\in\{1, 2\}$ because SA computation times become prohibitive for larger \(n\) and \(m\). \textcolor{black}{SA relies on local search, which struggles to explore efficiently because standard local search operators frequently generate precedence-violating sequences, resulting in high rejection rates,} even at its best configuration (200k steps). By directly constructing feasible assignments, Alg. \ref{alg:precept} avoids this bottleneck, achieving shorter makespans (\(7200 \pm 1550\)\,s for \(m=1\), \(4500 \pm 790\)\,s for \(m=2\)) while reducing computation time by roughly four orders of magnitude $(0.001 \pm 0.000$ s for $m=1$, and $0.002 \pm 0.000$ s for $m=2$).

\begin{table}[htpb]
\centering
\caption{Makespan (\(\max_\mu \tau\)) and computation time of Simulated
Annealing across maximum temperatures (\(T_{\max}\)) and cooling steps for different number of teams (\(m\))}
\label{tab:sa_comparison}
\setlength{\tabcolsep}{3pt}
\footnotesize
\resizebox{\columnwidth}{!}{%
\begin{tabular}{cl rrr rrr}
\toprule
 & & \multicolumn{3}{c}{Comp. (s)} & \multicolumn{3}{c}{\(\max_\mu \tau\) (s)} \\
\cmidrule(lr){3-5} \cmidrule(lr){6-8}
\(m\) & \(T_{\max}\) & 100k & 150k & 200k & 100k & 150k & 200k \\
\midrule
1 &  500 & 18.6 \(\pm\) 1.2 & 27.7 \(\pm\) 1.8 & 36.8 \(\pm\) 2.3 & 8600 \(\pm\) 1200 & 8100 \(\pm\) 1200 & 7800 \(\pm\) 1200 \\
  &  750 & 18.9 \(\pm\) 1.2 & 28.3 \(\pm\) 2.0 & 37.7 \(\pm\) 2.5 & 8400 \(\pm\) 1200 & 8100 \(\pm\) 1200 & 7800 \(\pm\) 1200 \\
  & 1000 & 19.6 \(\pm\) 1.6 & 29.3 \(\pm\) 2.4 & 38.7 \(\pm\) 3.1 & 8800 \(\pm\) 1200 & 8200 \(\pm\) 1200 & 7900 \(\pm\) 1100 \\
\midrule
2 &  500 & 37.8 \(\pm\) 2.4 & 55.8 \(\pm\) 3.3 & 73.5 \(\pm\) 4.0 & 6400 \(\pm\) 1000 & 6000 \(\pm\) 900 & 5700 \(\pm\) 790 \\
  &  750 & 41.4 \(\pm\) 4.7 & 61.7 \(\pm\) 7.5 & 82.0 \(\pm\) 9.8 & 7200 \(\pm\) 1000 & 6800 \(\pm\) 1100 & 6400 \(\pm\) 1100 \\
  & 1000 & 46.3 \(\pm\) 5.3 & 69.2 \(\pm\) 8.7 & 92 \(\pm\) 12 & 7900 \(\pm\) 1200 & 7400 \(\pm\) 1200 & 7200 \(\pm\) 1200 \\
\bottomrule
\end{tabular}%
}
\end{table}

\color{black}

\color{black}
\subsection{Comparison to the Decoupled Precedence Baseline}
\label{subsec:greedy}

To isolate the impact of cross-team precedence, we compare Alg.~\ref{alg:precept} against a decoupled baseline that assigns each weakly connected component of \(G_\prec\) to an individual team. The baseline estimates workloads via a precedence-respecting nearest-neighbor walk, distributes components using a Longest Processing Time (LPT) heuristic (i.e., assigning the component with the largest travel time to the team with the lowest accumulated time at the moment), and applies the identical tour-cutting and timing subroutines as Alg.~\ref{alg:precept}. Keeping precedence edges internal guarantees feasibility and speeds up computation by avoiding cross-team timing checks. Evaluated for \(n=100\), $m\in\{1,4,10\}$, and $d\in\{0.5, 3, 8, 20\}$ (Table~\ref{tab:baseline_comparison}), the baseline and Alg.~\ref{alg:precept} yield similar makespans for a single team. However, at \(d \ge 8.0\), \(G_{\prec}\) merges into a single connected component. The baseline assigns the entire mission to one team, freezing the makespan at $58800$~s (for \(d=20\)) regardless of \(m\). Alg. \ref{alg:precept} successfully distributes ordered points across multiple teams, achieving an \(89.8\%\) makespan reduction for \(m=10\) at \(d=20\): a parallelization trend that continues to scale for larger number of teams.

By skipping cross-team comparisons, the baseline computes faster in sparse scenarios (up to 57 times faster at \(d=0.5\)). However, this trade-off heavily favors Alg.~\ref{alg:precept} in high-density missions, where it yields a nearly 10 times shorter makespan for a negligible time cost ($0.043$~s versus $0.010$~s at \(d=20, m=10\)). Furthermore, the computation time of Alg.~\ref{alg:precept} inversely scales with precedence density while operating strictly within its \(O(mn^3)\) worst-case guarantee. Denser precedence constraints shrink the ready set \(\mathcal{R}\), which directly reduces the \(m \times |\mathcal{R}|\) pairings evaluated per iteration. Conversely, sparse precedence increases \(|\mathcal{R}|\), causing Alg.~\ref{alg:precept} to evaluate a higher number of cross-team pairings.

\begin{table}[htpb]
\centering
\caption{Makespan (\(\max_\mu \tau\)) and computation time comparison between Alg.~\ref{alg:precept} and the decoupled precedence baseline for different precedence densities (\(d\)) and numbers of teams (\(m\)), at \(n=100\)}
\label{tab:baseline_comparison}
\setlength{\tabcolsep}{3.5pt}
\footnotesize
\resizebox{\columnwidth}{!}{%
\begin{tabular}{cc rr rr}
\toprule
 & & \multicolumn{2}{c}{\(\max_\mu \tau\) (s)} & \multicolumn{2}{c}{Comp. (s)} \\
\cmidrule(lr){3-4} \cmidrule(lr){5-6}
\(d\) & \(m\) & \multicolumn{1}{c}{Baseline} & \multicolumn{1}{c}{Alg.~\ref{alg:precept}} & \multicolumn{1}{c}{Baseline} & \multicolumn{1}{c}{Alg.~\ref{alg:precept}} \\
\midrule
0.5 & 1  & \(15400 \pm 2100\) & \(\mathbf{15100 \pm 1600}\) & \(\mathbf{0.007 \pm 0.000}\) & \(0.071 \pm 0.005\) \\
    & 4  & \(10400 \pm 1100\) & \(\mathbf{5300 \pm 540}\)   & \(\mathbf{0.006 \pm 0.000}\) & \(0.205 \pm 0.021\) \\
    & 10 & \(8900 \pm 1700\)  & \(\mathbf{3900 \pm 310}\)   & \(\mathbf{0.006 \pm 0.001}\) & \(0.342 \pm 0.033\) \\
\midrule
3.0 & 1  & \(28800 \pm 2900\) & \(\mathbf{27600 \pm 2500}\) & \(\mathbf{0.007 \pm 0.001}\) & \(0.017 \pm 0.002\) \\
    & 4  & \(28700 \pm 2900\) & \(\mathbf{7600 \pm 530}\)   & \(\mathbf{0.008 \pm 0.000}\) & \(0.055 \pm 0.007\) \\
    & 10 & \(28700 \pm 2900\) & \(\mathbf{4900 \pm 240}\)   & \(\mathbf{0.007 \pm 0.001}\) & \(0.104 \pm 0.014\) \\
\midrule
8.0 & 1  & \(42400 \pm 3600\) & \(\mathbf{40400 \pm 4600}\) & \(\mathbf{0.008 \pm 0.000}\) & \(0.010 \pm 0.001\) \\
    & 4  & \(42400 \pm 3600\) & \(\mathbf{9900 \pm 890}\)   & \(\mathbf{0.008 \pm 0.001}\) & \(0.029 \pm 0.004\) \\
    & 10 & \(42400 \pm 3600\) & \(\mathbf{5500 \pm 360}\)   & \(\mathbf{0.008 \pm 0.001}\) & \(0.061 \pm 0.006\) \\
\midrule
20  & 1  & \(58800 \pm 5200\) & \(\mathbf{57100 \pm 4800}\) & \(0.009 \pm 0.000\) & \(\mathbf{0.007 \pm 0.001}\) \\
    & 4  & \(58800 \pm 5200\) & \(\mathbf{12000 \pm 1100}\) & \(\mathbf{0.010 \pm 0.001}\) & \(0.020 \pm 0.001\) \\
    & 10 & \(58800 \pm 5200\) & \(\mathbf{6000 \pm 410}\)   & \(\mathbf{0.010 \pm 0.001}\) & \(0.043 \pm 0.004\) \\
\bottomrule
\end{tabular}%
}
\end{table}
\color{black}

\section{Conclusion}
\color{black}
We addressed precedence-constrained planning for energy-constrained UAV-UGV teams. We formulated a mixed-integer program \eqref{opt_problem} for this NP-hard planning problem and introduced a polynomial-time heuristic (Alg.~\ref{alg:precept}) that simultaneously constructs assignments, energy-feasible tours, and travel timing. Guaranteed to find feasible solutions in \(\mathcal{O}(mn^3)\) time (Thm.~\ref{thm:main}), our approach effectively distributes ordered points across multiple teams. Numerical evaluations demonstrate that Alg.~\ref{alg:precept} scales to large instances that are intractable for exact solvers, and yields significantly shorter makespans than metaheuristic and heuristic baselines. Future work will extend the method to support additional spatio-temporal constraints, such as time windows and temporal logic. Additionally, we will incorporate online replanning to mitigate travel time disturbances during execution, alongside validation through real-world experimentation.

\color{black}
\bibliographystyle{IEEEtran}
\bibliography{references}   

\end{document}